\def\eqref#1{equation~\ref{#1}}
\def\1{\bm{1}}
\def\vb{{\bm{b}}}
\def\vf{{\bm{f}}}
\def\vh{{\bm{h}}}
\def\vv{{\bm{v}}}
\def\vy{{\bm{y}}}
\def\mG{{\bm{G}}}
\def\mM{{\bm{M}}}
\def\mP{{\bm{P}}}
\def\mV{{\bm{V}}}
\def\mW{{\bm{W}}}
\DeclareMathAlphabet{\mathsfit}{\encodingdefault}{\sfdefault}{m}{sl}
\SetMathAlphabet{\mathsfit}{bold}{\encodingdefault}{\sfdefault}{bx}{n}
\newcommand{\tens}[1]{\bm{\mathsfit{#1}}}
\def\tA{{\tens{A}}}
\def\tB{{\tens{B}}}
\def\tC{{\tens{C}}}
\def\tD{{\tens{D}}}
\def\tG{{\tens{G}}}
\def\tP{{\tens{P}}}
\def\tT{{\tens{T}}}
\def\tV{{\tens{V}}}
\def\tW{{\tens{W}}}
\def\tX{{\tens{X}}}
\def\tY{{\tens{Y}}}
\newcommand{\etens}[1]{\mathsfit{#1}}
\def\etA{{\etens{A}}}
\def\etC{{\etens{C}}}
\def\etD{{\etens{D}}}
\def\etE{{\etens{E}}}
\def\etG{{\etens{G}}}
\def\etT{{\etens{T}}}
\def\etX{{\etens{X}}}
\def\etY{{\etens{Y}}}
\newcommand{\softmax}{\mathrm{softmax}}
\theoremstyle{plain}
\newtheorem{theorem}{Theorem}[section]
\theoremstyle{definition}
\theoremstyle{remark}
\newtheorem{claim}[theorem]{Claim}
\begin{document}

\twocolumn[
\icmltitle{Language Modeling Using Tensor Trains}






\icmlsetsymbol{equal}{*}

\begin{icmlauthorlist}
\icmlauthor{Zhan Su}{equal,yyy}
\icmlauthor{Yuqin Zhou}{equal,yyy}
\icmlauthor{Fengran Mo}{comp}
\icmlauthor{Jakob Grue Simonsen}{yyy}
\end{icmlauthorlist}

\icmlaffiliation{yyy}{Department of Computer Science, University of Copenhagen, Copenhagen, Denmark}
\icmlaffiliation{comp}{Department of Computer Science and Operations Research, University of Montreal, Quebec, Canada}

\icmlcorrespondingauthor{Zhan Su}{zhan.su@di.ku.dk}
\icmlcorrespondingauthor{Jakob Grue Simonsen}{simonsen@di.ku.dk}


\vskip 0.3in
]



\printAffiliationsAndNotice{\icmlEqualContribution} 

\begin{abstract}
We propose a novel tensor network language model based on the simplest tensor network (i.e., tensor trains), called `Tensor Train Language Model' (TTLM). TTLM represents sentences in an exponential space constructed by the tensor product of words, but computing the probabilities of sentences in a low-dimensional fashion. We demonstrate that the architectures of  Second-order RNNs, Recurrent Arithmetic Circuits (RACs), and Multiplicative Integration RNNs are, essentially, special cases of TTLM. Experimental evaluations on real language modeling tasks show that the proposed variants of TTLM (i.e., TTLM-Large and TTLM-Tiny) outperform the vanilla Recurrent Neural Networks (RNNs) with low-scale of hidden units. \footnote{The code is available at \url{https://github.com/shuishen112/tensortrainlm}.}
\end{abstract}

\section{Introduction}



Human languages, like many biological systems, including families of proteins, genomes, and neurons in the brain, have significant long-range correlations that decay with a power law \citep{tagliazucchi2012criticality, mora2011biological}.  
Current network models like LSTMs \cite{hochreiter1997long} are hard to match long-range and higher-order statistics of natural languages \citep{lin2016critical}.

Recently, researchers have turned to tensor network language modeling,  which contains models that exhibit correlation functions that decay with the power law \citep{pestun2017tensor,pestun2017language,miller2021tensor}. Tensor networks are, roughly, decompositions of large tensors into sets of smaller tensors and have been employed in physics, mathematics, and machine learning \citep{cohen2016expressive}. However, the so-called `tensor network language model' is either a concept that needs to be proved practically \cite{pestun2017tensor} or unsuitable in real-world language modeling tasks \citep{miller2021tensor} due to their way of modeling probabilities.  Towards making tensor network language modeling practical, we make the first step to applying it to real language modeling datasets.

As proof-of-concept work, we derive a Tensor Train Language Model (TTLM) (the simplest tensor network).  Technically, we represent a sentence based on the exponential semantic space constructed by the tensor product of word representations. The probability of the sentence is defined by the inner product of two high-dimensional tensors: the input $\Phi(X)$ and the global 
coefficients $\mathcal{A}$, and decomposed into conditional probabilities.  

Under the framework of TTLM,  we propose two variants:  TTLM-Tiny and TTLM-Large. Also, we clarify the relationship between the proposed TTLM and a series of Recurrent Neural Networks (RNNs) (i.e., Second-order RNNs \citep{goudreau1994first},  Recurrent Arithmetic Circuits (RACs) \citep{levine2018benefits}, and Multiplicative Integration RNNs (MI-RNNs) \citep{wu_multiplicative_2016}). These connections open a new eye to understanding RNNs and give some natural implementations for TTLM.

We benchmark these TTLM variants and analyze the difference in their working mechanism and behaviors. Experimental results on the language modeling task show that our TTLM variants could outperform than Vanilla-RNNs under the same training setting. These demonstrate the feasibility of TTLM.

The main contributions of our work can be summarized as follows:
\begin{itemize}
    \item We propose a novel Tensor Train Language Model, as an illustration of how tensor networks can be applied to real-world language modeling datasets. 
    \item We propose two novel TTLM variants, TTLM-Large and TTLM-Tiny, and theoretically demonstrate the relationship between  TTLM and a series of existing RNNs.
    \item Compared to Vanilla-RNNs on WikiText-2 and PTB datasets, TTLM-Large reduces perplexity by 14.3 and 16.0, respectively, and TTLM-Tiny reduces perplexity by 1.7 and 8.5, respectively.
\end{itemize}

\section{Related Work}

Previous studies on tensor networks in machine learning have mainly been devoted to analyzing the theoretical properties of neural networks. A better understanding of feed-forward, convolutional and recurrent architectures has been gained, including compression parameters \citep{novikov2015tensorizing}, expressive power \citep{cohen2016expressive, cohen2016convolutional, khrulkov2018expressive}, and depth efficiency for long-term memory \citep{levine2018benefits}. For sequence modeling tasks in NLP, there are two stages of the previous research.

\paragraph{Theoretical Proposals.}
\citep{pestun2017tensor} propose a tensor network language model aims to construct the long-range correlation in real-world language modeling, \citep{pestun2017language} propose a quantum statistical language model on a one-dimensional lattice which is called trace-density model. 
To the best of our knowledge, these tensor network language models have remained a theoretical proposal instead of an empirical one.

\paragraph{Sequence Modeling.} \cite{novikov2021tensor} propose a new efficient tensor train-based approach to tensor-train density estimation that allows efficient computation of probability density function. \cite{miller2021tensor} apply a recurrent tensor network, uniform matrix product state, to the probabilistic sequence modeling while opening significant new research directions in the design of sequential generative models. However, probably due to the efficiency issue, most of the existing models have only been tested on small vocabulary size and simple datasets like Tomita grammars \citep{tomita1982dynamic}, further evaluation on moderately-scaled natural language datasets is necessary to fully assess their performance. We provide a comparison of the datasets we used with related work in the Appendix: \ref{sec:datasets}.

This paper is the first work to  derive a tensor network language model in a way that can be applied to real-world language modeling datasets. The efforts to improve efficiency are twofold. First, we do not calculate the probability normalization term used for the total probability law; instead, we turn to  calculate conditional probabilities based on context representations, as described in Sec. \ref{sec:recursive probability}.
Second, we further decompose TT cores and use  low-scale hidden units, see in Sec. \ref{sec: Tinyttlm}.

\section{Preliminaries}
\label{sec:preliminaries}
We briefly recapitulate basic notions and notations\footnote{Most of the notations here  follow the textbook \textit{Deep Learning}
\cite{goodfellow2016deep}.}; full technical introductions can be found in standard textbooks \cite{BI20221, 10.5555/1643460}.


\paragraph{Notation.} For the purposes of this paper, every \textit{tensor} $\tA$  is a multidimensional array of elements (called \emph{components}) of $\mathbb{R}$, each denoted by its integer coordinates in the array; e.g., for a two-dimensional array, the component at position $i,j \in \mathbb{N}$ is denoted $\etA_{ij}$.
 The \emph{order} of a tensor is how many indices it has (e.g., a vector $\vv$ is a first-order tensor, a matrix $\mM$ is a second-order tensor, etc.). The \emph{dimension} of a tensor refers to the number of values that a particular index (or so-called \textit{mode}) can take, e.g., the dimension of $\tB \in \mathbb{R}^{I_1 \times I_2 \times I_3}$ is $I_1 \times I_2 \times I_3$. 

\paragraph{Tensor Product \cite{cohen2016expressive}.} For two tensors $\tC \in \mathbb{R}^{I_1 \times \cdots \times I_j}$ (order $j$) and $\tD \in \mathbb{R}^{I_{j+1}, \times \cdots \times I_{j+k}}$ (order $k$), their \textit{tensor product} is denoted by $\otimes$ and return a tensor $\etE_{i_1 \cdots i_{j+k}} = \etC_{i_1...i_j} \cdot \etD_{i_{j+1} \cdots i_{j+k}}$ (order $j+k$).  Notice that in the case $j = k = 1$, the tensor product reduces to an outer product between vectors.

\paragraph{Generalized Inner Product  \citep{kossaifi2020tensor}.}
\label{Generalized inner product}
For two tensor $\tX,\tY\in\mathbb{R}^{I_1\times I_2 \times \cdots \times I_N}$ of the same size, their \textit{inner product} is defined as $\langle \tX,\tY \rangle=\sum_{i_1=1}^{I_1}\sum_{i_2=1}^{I_2} \cdots \sum_{i_N=1}^{I_N} \etX_{i_1,i_2,...,i_N} \etY_{i_1,i_2,...,i_N}$. For two tensors $\tX \in \mathbb{R}^{I_1\times I_2 \times \cdots \times I_N \times I_x}$ and $\tY \in \mathbb{R}^{I_1\times I_2\times \cdots I_N \times I_y}$ sharing $N$ modes of the same size, the “generalized inner product” is calculated as
\begin{align} 
    \langle \tX,\tY \rangle_N =\sum_{i_1=1}^{I_1}\sum_{i_2=1}^{I_2}\cdot\cdot\cdot\sum_{i_N=1}^{I_N} \etX_{i_1,i_2,...,i_N} \etY_{i_1,i_2,...,i_N} \nonumber
\end{align}
with $\langle \tX,\tY \rangle_N \in \mathbb{R}^{I_x \times I_y}$.

\begin{figure*}[t]
\centering
\includegraphics[scale=0.45]{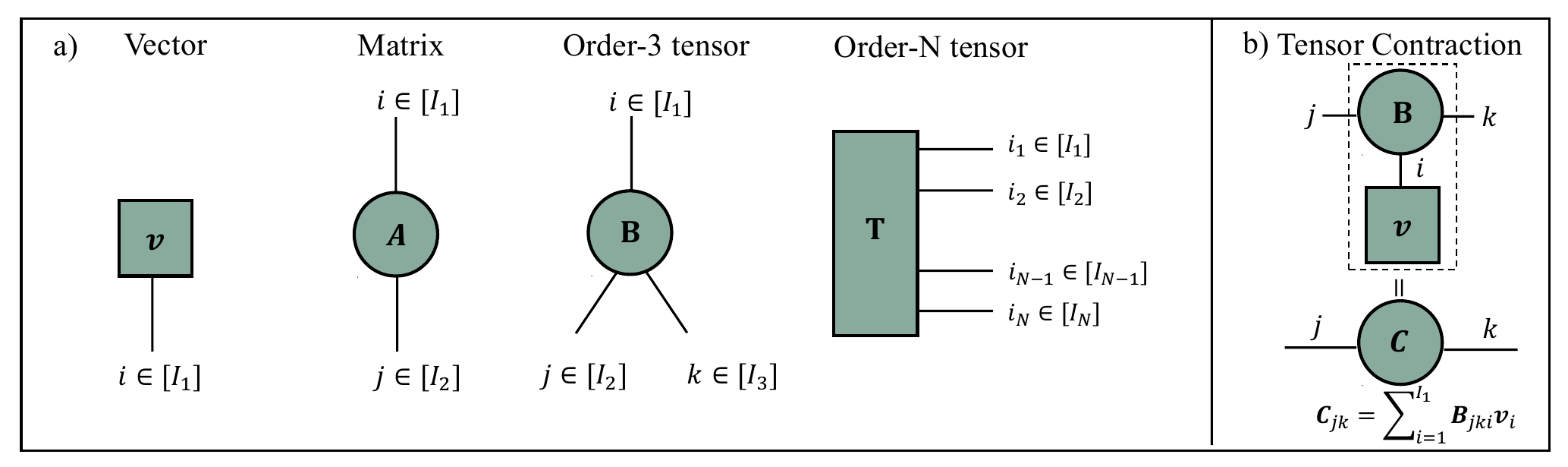}
\caption{A quick introduction to \textit{tensor diagram notation}. There are two rules of tensor diagrams: (1) tensors are notated by solid shapes with a number of 'legs'  corresponding to their indices; (2) connecting two index lines implies a \textit{contraction} or summation over the connected indices. In this paper, we augment our equations with these diagrams to make them easier to understand. }
\label{fig: tensor_diagram}
\end{figure*}

\section{Language Modeling Using Tensor Trains}
\label{sec:ttlm}

We introduce a language model in tensor space in Sec.\ \ref{sec:lmts}, and define our \emph{Tensor Train Language Model} in Sec.\ \ref{sec: TTLM}. 




\subsection{Language Models in a Tensor Space}
\label{sec:lmts}





Natural language typically has complex dependencies between features (e.g., tokens or words) \citep{hou2013mining}\footnote{Such  dependencies (including collocation) have been viewed as an analogy of entanglement \citep{hou2013mining}.} that are not captured well by standard methods such as feature concatenation.  One could also see a similar interaction between any arbitrary features in factorization machines \citep{rendle2010factorization}. Given text consists of $N$ words $X = [x^{(1)}, x^{(2)}, \cdots ,x^{(N)}]$ and a feature extractor $\vf_i \in \mathbb{R}^{I_i}$ (it can be one-hot encoding or word embedding), we now define a representation of $X$ designed to capture these dependencies:
\begin{equation} 
\begin{split}
    \label{eq: Phi X}
    \Phi(X) 
    &= \vf_1 (x^{(1)}) \otimes  \vf_2 (x^{(2)}) \cdots \otimes \vf_N (x^{(N)}) \\
    &= \bigotimes^N_{i=1} \vf_i (x^{(i)}) \\
\end{split}
\end{equation}
where the tensor space is  $\mathbb{R}^{I_1}\otimes\mathbb{R}^{I_2} \otimes\cdots\otimes\mathbb{R}^{I_N}$. Each component of $\vf_i$ represents independent meaning-bearing units, such as morphemes or latent factors.  For simplicity, we assume that a text shares the same one-hot encoding $\vf(x^{(t)}) \in \mathbb{R}^{|V|}$ in later sections. Consequently, $\Phi(X)$ is a $|V|^N$-dimensional tensor that records all possible combinations of words in $X$. 

Inspired by \cite{zhang2019generalized, kossaifi2020tensor}, we define a tensor regression model to compute the probability for each text $X$:
\begin{align} 
    \label{eq: general definition}
    p(X)
    &= \langle\mathcal{A}, \Phi(X)\rangle \nonumber \\
    &= \sum_{i_1,i_2,\cdots,i_N = 1}^{|V|} \mathcal{A}_{i_1,\cdots, i_N} \cdot \Phi(X)_{i_1,\cdots, i_N}
\end{align}
where $\langle \cdot \rangle$ denotes the inner product of two same-sized tensors, and  $\mathcal{A}$ is a regression weight tensor of the same shape as $\Phi(X)$ in the tensor space $\mathbb{V}^{\otimes N} = \underbrace{\mathbb{V} \otimes \cdots \otimes \mathbb{V}}_N $ where $\mathbb{V}$ refers to $\mathbb{R}^{|V|}$. Similar functions were considered in \cite{novikov2016exponential, NIPS2016_6211, khrulkov2018expressive, zhang2019generalized}.


\subsection{Tensor Train Language Model}
\label{sec: TTLM}
\subsubsection{Tensor-Train Decomposition}
Suppose the sequence of indices of words in the text $X$ is  $w_1, w_2, \cdots, w_N$, where  $w_i \in \{1, 2, \cdots, |V|\}$ and its corresponding weight in $\mathcal{A}$ is denoted as $\mathcal{A}_{w_1 w_2\cdots w_N}$. We use TT decomposition to  represent $\mathcal{A}_{w_1 w_2\cdots w_N}$ in the TT format \citep{oseledets2011tensor} as follows:

\begin{align} 
    \label{eq: TT_index}
    \mathcal{A}_{w_1w_2...w_N} 
    & = \underbrace{\tG^{(1)}_{:, w_1}}_{1\times R_1} \underbrace{\tG^{(2)}_{:, w_2, :}}_{R_1 \times R_2} \cdots \underbrace{\tG^{(N)}_{:, w_N}}_{R_{N-1} \times 1}  \\
    &= \sum_{\alpha_1, \cdots, \alpha_{N-1}}   \etG^{(1)}_{w_1\alpha_1} \etG^{(2)}_{\alpha_1w_2\alpha_2} \cdots \etG^{(N)}_{\alpha_{N-1}w_N}  \nonumber
\end{align}

where the tensors  $\tG^{(t)} \in \mathbb{R}^{R_{t-1} \times |V| \times R_t}$ ($t =1,...,d$, $R_0 = R_N = 1$ by definition) are called \textit{TT cores}, and  $R_k$ for $k = 1,\cdots,N$ are called \textit{TT ranks}. 

Despite its site-dependent TT cores $\tG^{(t)}$ potentially giving it more expressiveness for language modeling, this property currently generates unnecessary obstacles to its applicability, like the choice of $R_t$. Here we follow the convention of considering a special class of TT decompositions \cite{khrulkov2018expressive, miller2021tensor}, i.e. supposing all the intermediate TT cores are equal to each other $\tG = \tG^{(2)},\ldots,\tG^{(N-1)} \in \mathbb{R}^{R \times |V| \times R}$ and $\tG^{(1)} = \tG^{(N)} \in \mathbb{R}^{|V| \times R}$ in Eq. \ref{eq: TT_index}. 

\subsubsection{Definition of TTLM}
\begin{figure*}[t]
    \centering
    \includegraphics[scale=0.5]{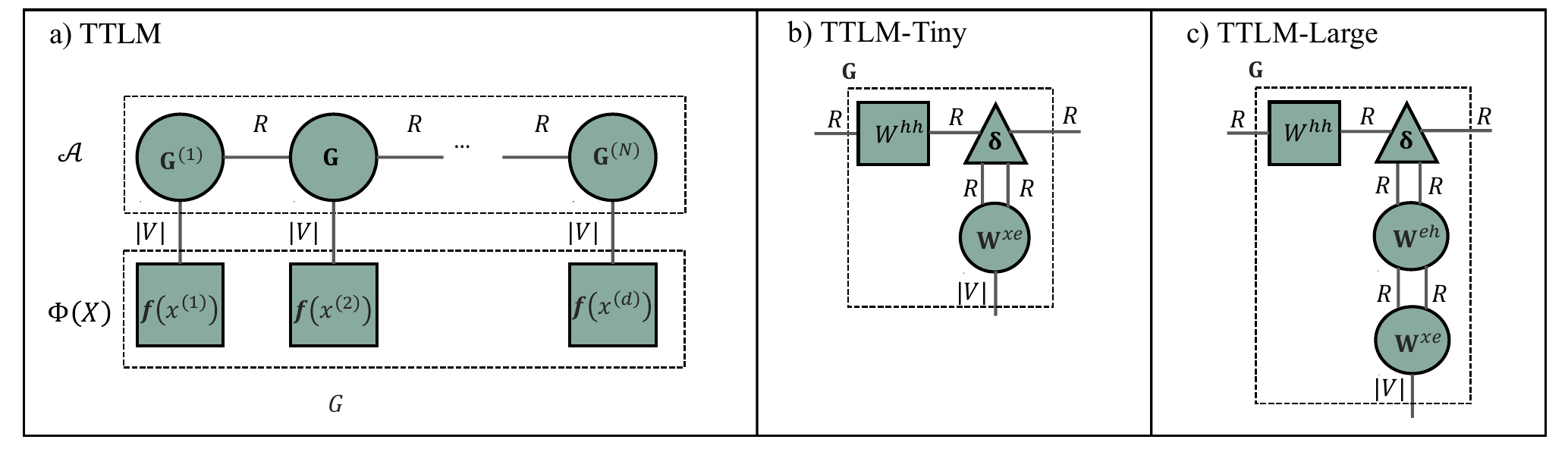}
    \caption{a) Tensor Train Language Model based on Eq. \ref{eq: ttlm_general}. b) TT core of TTLM-Tiny. c) TT core of TTLM-Large. The dashed line in the square represents $\mathcal{A}, \Phi(X)$, or $\tG$. Note that the only difference between TTLM-Large and TTLM-Tiny is whether to use tensor $\tW^{eh}$.}
    \label{fig: TTLM}
\end{figure*}

We define \textit{Tensor Train Language Model} (TTLM) as: 
\begin{equation}
\begin{split}
    p(X) = \sum_{i_1, \cdots, i_N = 1}^{|V|} \sum_{\alpha_1, \cdots, \alpha_{N-1} = 1}^R f(x^{(1)})_{i_1}\etG^{(1)}_{i_1\alpha_1} \cdots \\ f(x^{(N)})_{i_N}\etG^{(N)}_{\alpha_{N-1}i_N}  \label{eq: ttlm_general}
\end{split}
\end{equation}

where each $\vf(x^{(t)})$ is a one-hot vector having $w_t = 1$ for at most one $t$, and has zeros elsewhere. The tensor diagram notation of TTLM is shown in Fig. \ref{fig: TTLM}a. Note that  Eq. \ref{eq: ttlm_general} can compute the elements
of $\mathcal{A}$ in the low-dimensional space as Eq. \ref{eq: TT_index} does. This can be observed  if we represent the elements of  $\tG^{(t)}_{:, w_t, :}$ in Eq.\ \ref{eq: TT_index} as:
\begin{align}
    \label{eq: single G}
    \etG^{(t)}_{\alpha_{t-1}w_t\alpha_{t}}
    & = \sum_{{i = 1}}^{|V|} f(x^{(t)})_{i}\etG^{(t)}_{\alpha_{t-1}i\alpha_{t}} 
\end{align}
Since  $\mathcal{A}_{w_1w_2...w_N}$  here equals to  $p(X)$, we can derive Eq. \ref{eq: ttlm_general} by inserting Eq. \ref{eq: single G} into Eq.\ \ref{eq: TT_index}.

The critical difference between TTLM defined by Eq. \ref{eq: ttlm_general} and  Eq. \ref{eq: TT_index}  is that TTLM has combined the weights $\mathcal{A}$ and the input data $\Phi(X)$  together, indicating its potential to be used for language modeling tasks.

\subsubsection{Recursive Probability Computation.}
\label{sec:recursive probability}
We recursively unfold the calculation of TTLM in Eq. \ref{eq: ttlm_general} and find that $\tG$ has two sources of ``input'': the information from the previous recursive unfolding, and the input data $\vf(x^{(t)})$ (see Eq. \ref{eq: TTLM_unfold} for a detailed version). 
From this perspective, $\tG$ acts as a bilinear map $\tG: \mathbb{R}^{|V|} \times \mathbb{R}^R \rightarrow \mathbb{R}^R $, and we can regard the information in the previous step as a hidden state $\vh^{(t)}_\text{TTLM}$, given by:
\begin{align}
    \label{eq: ttlm cell full}
    \vh^{(t)}_{\textrm{TTLM}}
    & = \vf(x^{(t)})^T\tG  \vh^{(t-1)}_{\textrm{TTLM}} 
\end{align}
where $\vf(x^{(t)})$, $\tG$, and  $\vh^{(t-1)}_{\textrm{TTLM}}$ are contracted together (we permute the indices of $\tG$ from $\mathbb{R}^{R \times |V| \times R}$ to $\mathbb{R}^{|V| \times R \times R}$ which does not change the number of indices). 

Utilizing this recursive property, we here provide further details about computing $p(X)$ by TTLM in practice. In language modeling, $p(X)$ is often decomposed using the chain rule \citep{bahl1983maximum} as follows:
\begin{align}
    p(X) = \prod_{t=1}^N p(x^{(t)}|x^{(1:t-1)}) \nonumber
\end{align}
where $x^{(1:t-1)}$ denotes the text $[x^{(1)}, x^{(2)}, \cdots ,x^{(t-1)}]$.  At time $t$, the output prediction of a model, $\vy^{(t)}\in \mathbb{V}$, is a probability distribution of word $x^{(t)}$ given $x^{(1:t-1)}$. 

In TTLM, we define $\vy^{(t)}$ as follows: 
\begin{align}
\label{eq: probability_TT}
    \vy^{(t)} = \psi \left(\tG^{(t)} \vh^{(t-1)}_{\textrm{TTLM}}\right)
\end{align}
where $\tG^{(t)}\in \mathbb{R}^{|V| \times R}$ is the last TT core in TT format at time $t$. $\psi$ is any function that ensures that $\vy^{(t)}$ is non-negative and that the conditional probabilities sum to $1$. For the use of TTLM as a component in a larger architecture, $\psi$ can be chosen as a constant scaling function to preserve linearity; for stand-alone use of TTLMs, $\psi$ can be chosen to be any appropriate activation function---in the remainder of the paper, we shall use the $\softmax$ function \cite{bridle1990probabilistic}. Fig. \ref{fig: probability_ttlm} provides an example of a recursive calculation of conditional probability.
 \begin{figure}[t]
    \centering
\includegraphics[width=0.40\textwidth]{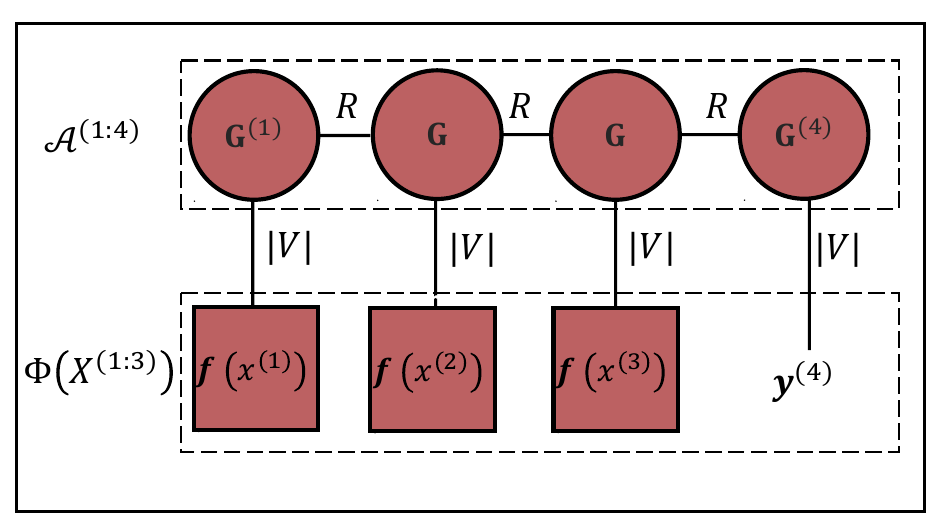}
    \caption{Recursive calculation of conditional probability in TTLM. Here we provide an example that given the text $x^{(1:3)}$, $\vy^{(4)}=\psi(\tG^{(4)}\vh_{\textrm{TTLM}}^{(3)})$ where $\vy^{(4)}$ is the probability distribution of word $x^{(4)}$. }
    \label{fig: probability_ttlm}
\end{figure}

We can derive  the definition of $\vy^{(t)}$ in high-dimensional space, if we substitute $\vh^{(t-1)}_{\textrm{TTLM}}$ in  Eq. \ref{eq: probability_TT}  by Eq. \ref{eq: ttlm_general} and Eq. \ref{eq: ttlm cell full}:
\begin{align} \small
      \vy^{(t)} &= \psi \left( \sum_{i_1, \cdots, i_{t-1}} \sum_{\alpha_1,\cdots, \alpha_{t-1}} f(x^{(1)})_{i_1}\etG^{(1)}_{i_1\alpha_1} \cdots \tG^{(t)}_{\alpha_{t-1}}  \right) \label{eq: probability_original_1} \\
    &=  \psi \left(\langle \mathcal{A}^{(1:t)}, \Phi(X^{(1: t-1)}) \rangle_{t-1} \right)  \label{eq: probability_original_3}
\end{align}
where $\mathcal{A}^{(1:t)}\in \mathbb{V}^{\otimes t}$,  $\Phi(X^{(1: t-1)}) = \bigotimes\limits^{t-1}_{i=1}  \vf(x^{(i)}) \in \mathbb{V}^{\otimes t-1}$ and $\langle \cdot \rangle_{t-1}$ denotes the "generalized inner product" defined in Sec. \ref{sec:preliminaries}. Note that Eq. \ref{eq: probability_original_1} is the low-dimensional form of Eq. \ref{eq: probability_original_3}, similarly to the relationship between Eq. \ref{eq: ttlm_general} and Eq. \ref{eq: general definition}.

By these definitions, there are some interesting properties of TTLM. (1) We can use \emph{teacher forcing} \citep{marcus1998rethinking} to learn parameters of TT cores. (2)  The hidden-to-output  tensor $\tG^{(t)}$ is defined to be the same as the input-to-hidden tensor $\tG^{(1)}$. (3) $\tG$ and $\tG^{(t)}$ have no parameters in common. We provide a detailed explanation of the  relationship between different TT cores in Appendix \ref{sec: relationship}.

\section{TTLM Variants}

To show the versatility and practical applicability of the TTLM framework, we now propose two new variants: TTLM-Large and TTLM-Tiny in Sec. \ref{sec: Tinyttlm}.  We briefly summarize the relationship between TTLM and some widely-used RNNs in Sec.\ \ref{sec: Existing  TTLM variants}.

\subsection{New Variants: TTLM-Large and TTLM-Tiny}
\label{sec: Tinyttlm}

The TT core $\tG$ in TTLM is an entire third-order tensor. In the two variants, we decompose $\tG$ into several separate tensors without violating the TT format, as shown in Fig.\ \ref{fig: TTLM}b and Fig.\ \ref{fig: TTLM}c. We define TTLM-Tiny and TTLM-Large as follows:

\begin{equation}
\begin{split}
& \vh^{(t)}_{\textrm{Tiny}} = \vf (x^{(t)})^T \tW^{xe} \bm{\delta} \mW^{hh}\vh^{(t-1)}_{\textrm{Tiny}}\\
& \vh^{(t)}_{\textrm{Large}}  = \vf (x^{(t)})^T \tW^{xe} \tW^{eh}  \bm{\delta} \mW^{hh}\vh^{(t-1)}_{\textrm{Large}}\\
\end{split}
\end{equation}

where $\mW^{hh} \in \mathbb{R}^{R \times R}$ is the hidden-to-hidden matrix;  $\tW^{xe} \in \mathbb{R}^{|V| \times R \times R}$ is the input-to-hidden tensor; $\tW^{eh} \in \mathbb{R}^{R \times R \times R \times R}$; and $\bm{\delta} \in  \mathbb{R}^{R \times R \times R \times R}$ is a fourth-order diagonal tensor such that $\delta_{ijkl} = 1$ if{f} the $i = j = k = l$, and $\delta_{ijkl} = 0$ otherwise.

The relationship between our proposed models and TTLM is as follows: $\tW^{xe}$ in both models take the same role as $\tG^{(t)}$ in TTLM (i.e. input-to-hidden and hidden-to-output), while  $\tG = \tW^{xe} \bm{\delta} \mW^{hh}$ in TTLM-Tiny and $\tG = \tW^{xe} \tW^{eh} \bm{\delta} \mW^{hh}$ in TTLM-Large.

As in RNNs, we compute the conditional probability recursively for TTLM-Large and TTLM-Tiny as:

\begin{equation}
\vy^{(t)} = \psi (\tV\tP\vh^{(t)})
\end{equation}

where $\tV\in\mathbb{R}^{R \times |V|\times R}$ is an output embedding tensor, $\tP\in\mathbb{R}^{R \times R\times R}$ is a projector tensor. Then we tie the input tensor $\tW^{xe}$ to the output embedding tensor $\tV$ (we provide a detailed explanation in Sec. \ref{sec:implementations}).

One obvious advantage of our models is to utilize information from the hidden layer and input data separately. Such interaction, particularly TTLM-Tiny, can potentially avoid overfitting, similarly to \cite{wu_multiplicative_2016} where \textit{multiplication integration} between two sources of "input" can outperform many other methods. In Sec \ref{sec: rank}, we provide relevant experimental evidence.

\subsection{Existing  TTLM Variants} 
\label{sec: Existing  TTLM variants}
Given the fact that TT scores of TTLM can vary,  Appendix \ref{sec: TTLM generalization} provides a detailed illustration that three existing models, namely Second-order RNNs,  Recurrent Arithmetic Circuits (RACs), and Multiplicative Integration RNNs (MI-RNNs)  can be considered as one of the ”special” implementations of TTLM. 

We briefly summarize the differences between the three models: 1) Second-order RNNs use the third-order $\tT$ as the TT cores with an activation function given Eq. \ref{eq: second-order-rnn}; 2)
RACs use $\mW^{hx} \odot \mW^{hh}$ as the TT cores given Eq.\ \ref{eq: RAC_simple};  3) MI-RNNs use $\mW^{hx} \odot \mW^{hh}$  as the TT cores with an activation function given Eq. \ref{eq: mirnn}. 

Along with our two proposed models, we study the experimental performance of second-order RNNs, RACs and MI-RNNs compared to TTLM-Large and TTLM-Tiny in Sec. \ref{sec: experiments}.

\section{Experimental Evaluation} 
\label{sec: experiments}

To further understand the properties of TTLM variants, we now investigate the effectiveness of TTLM, TTLM-Large and TTLM-Tiny compared to Second-order RNNs, RACs, MI-RNNs, and Vanilla-RNNs.

We specify our  experimental setting in Sec. \ref{sec:setting} and implementation details in Sec. \ref{sec:implementations}. We study the influence of ranks on the performance of TTLM variants in Sec \ref{sec: rank} and examine the impact of nonlinear activation functions on the effectiveness of TTLM variants in Sec.\ref{sec: activation}.

\subsection{Experimental Setting}
\label{sec:setting}
\begin{table*}[t]
\small
\centering
  \begin{tabular}{l|cc|cc|ccc}
    \toprule
      \multirow{2}{*}{Model} & \multicolumn{2}{c|}{
      WikiText-2} &
    \multicolumn{2}{c|}{PTB} & Hidden & Layer & Embed  \\
    
    &Param & PPL  & Param  & PPL & (Rank) & & Size\\
    
    \midrule
    Transformer \citep{vaswani2017attention} & 90.5M & 293.0 & 32.8M & 208.7 &20 & 1 & 400 \\
    Vanilla-RNNs \citep{mikolov2012context} & 11.6M & 96.6 & 4.0M & 115.3 & 20 & 1 & 400\\
    Second-order RNNs \citep{hochreiter1997long} & 11.8M & 96.0 & 4.2M& 108.2 & 20 & 1 & 400\\
    RACs  \citep{levine2018benefits} &11.6M& 97.6 & 4.0M  & 116.8 &20 & 1 & 400 \\
    MI-RNNs \citep{wu_multiplicative_2016} &  11.6M & 99.6 &4.0M & 119.1 & 20 & 1 & 400 \\
    \hline
    TTLM & 12.2M & 546.4 & 4.2M & 559.8 & 20& 1 & 400\\
    TTLM-Tiny & 11.6M &94.9 & 4.0M &106.8 & 20 & 1 & 400\\
    TTLM-Large &11.8M &\textbf{82.3} & 4.2M & \textbf{99.3} & 20 & 1 & 400 \\
  \bottomrule
\end{tabular}
\caption{Test set PPL on the WikiText-2 and PTB datasets. The symbol "$-$" means these data are not available in their original paper. The “Param” column denotes the number of parameters; see Sec. \ref{sec:implementations} for a detailed description.  The "Hidden (Rank)" column  denotes the number of hidden units or ranks. The "Embed Size" column denotes the size of each embedding vector.  We report the lowest test set PPL of the Transformer whose number of heads is  selected from [2, 4, 5, 8]. \label{tab:evaluation}}
\end{table*}
\paragraph{Task, Datasets, and Metric.}  We conduct experiments on two word-level language model datasets: \textbf{(1)} English Penn Treebank (PTB) \citep{marcinkiewicz1994building}, which consists of 929k training tokens, 73k validation tokens, and 82k test tokens. Its vocabulary size is 10k. \textbf{(2)} The WikiText-2 dataset \citep{merity2016pointer} is derived from Wikipedia articles and consists of 2088k training tokens, 217k validation tokens, 45k test tokens, and a vocabulary of over 30k types. We compare these models on the language modeling task,  evaluated by the Perplexity (PPL) \citep{meister-cotterell-2021-language}; the lower the PPL, the better the model.


\paragraph{Baselines.} Our models are compared with the following baselines: 
  Transformer \citep{vaswani2017attention} ,  Vanilla-RNNs \citep{mikolov2012context},     Second-order RNNs \citep{hochreiter1997long}, Recurrent Arithmetic Circuits (RACs)   \citep{levine2018benefits}, and   Multiplicative Integration RNNs (MI-RNNs) \citep{wu_multiplicative_2016}. The implementation details are provided in Sec. \ref{sec:implementations}.









\paragraph{Hyperparameters.} \textbf{(1)} To compare the effectiveness of comparable models on the same scale, we set the rank/hidden units of TTLM variants/Vanilla-RNNs as [5, 10, 20, 25, 30, 35, 40, 45, 50]. The embedding size of these models is the squared number of hidden units/ranks. This setup is because of the architectures of TTLM-Large and TTLM-Tiny as introduced in Sec. \ref{sec: Tinyttlm}. \textbf{(2)}  To avoid the potential impact of the embedding size on Vanilla-RNNs' performance, we provide several common choices of embedding size in the model by setting its embedding size as [100, 200, 300]. We name them as RNNs-100, RNNs-200, and RNNs-300 correspondingly and display them in Fig. \ref{fig:rank}. \textbf{(3)}  We train all models for 50 epochs and choose the best model in the validation set to predict the result in the test set. \textbf{(4)}   The weights in the models are adjusted to minimize the average cross entropy loss over training sequences via stochastic gradient descent computed using the truncated backpropagation through time algorithm \citep{werbos1990backpropagation,williams1990efficient}.  The random seed is fixed to ensure the experimental results are not influenced by initializing the weights.

\subsection{Implementations}
\label{sec:implementations}

\begin{table}[ht] \small
    \centering
    \begin{tabular}{ll}
    \toprule
        Model & Training Parameters \\
        \midrule
        Vanilla-RNN & $\mW^{xe}\in \mathbb{R}^{E\times |V|}$, $\mW^{eh}\in \mathbb{R}^{E\times H}$, \\ 
        & $\mW^{hh}\in \mathbb{R}^{H\times H}$, $\mP\in\mathbb{R}^{H\times E}$, \\
        & $\mV\in\mathbb{R}^{E\times|V|}$  \\
        \hline
        MI-RNNs & $\mW^{xe}\in \mathbb{R}^{E\times |V|}$, $\mW^{eh}\in \mathbb{R}^{E\times H}$, \\ 
        & $\mW^{hh}\in \mathbb{R}^{H\times H}$, $\mP\in\mathbb{R}^{H\times E}$, \\
        & $\mV\in\mathbb{R}^{E\times|V|}$  \\
       
        \hline
        RACs & $\mW^{xe}\in \mathbb{R}^{E\times |V|}$, $\mW^{eh}\in \mathbb{R}^{E\times H}$, \\
        & $\mW^{hh}\in \mathbb{R}^{H\times H}$, $\mP\in\mathbb{R}^{H\times E}$, \\
        & $\mV\in\mathbb{R}^{E\times|V|}$\\
        \hline
        Second-order RNNs & $\mW^{xe} \in \mathbb{R}^{E\times |V|}$, $\tT \in \mathbb{R}^{H\times H \times H}$, \\         
        & $\mW^{hh}\in \mathbb{R}^{E \times H}$, $\mP\in\mathbb{R}^{H\times E}$, \\
        & $\mV\in\mathbb{R}^{E\times|V|}$ \\
        \hline
        TTLM & $\tG\in \mathbb{R}^{R\times |V|\times R}$, $\mG^{(t)}\in \mathbb{R}^{R \times |V|}$, \\ 
        
        & $\mG^{(1)} \in \mathbb{R}^{R \times |V|}$ \\
        \hline
        TTLM-Tiny & $\tW^{xe}\in \mathbb{R}^{R\times |V| \times R}$, $\mW^{hh}\in\mathbb{R}^{R\times R}$, \\
        & $\tP\in\mathbb{R}^{R\times R\times R}$, $\tV\in\mathbb{R}^{R\times R\times |V|}$\\
        
        \hline
        TTLM-Large & $\tW^{xe}\in\mathbb{R}^{R\times |V|\times R}$, \\ & $\tW^{eh}\in\mathbb{R}^{R\times R\times R\times R}$, \\
        & $\mW^{hh}\in\mathbb{R}^{R\times R}$, $\tP\in\mathbb{R}^{R\times R\times R}$, \\
        &  $\tV\in\mathbb{R}^{R\times R\times |V|}$\\
    \bottomrule
    \end{tabular}
    \caption{Training parameters in our implementation. $E$ is the embedding size, $H$ is the hidden units in RNNs, and $R$ is the rank in the TTLM.  We set $H=R$ and $E=R^2$ to make the parameters of all models in the same scale. The parameters of $\mW^{xe}$ are uniformly initialized in the interval $[-0.1,0.1]$, $\mW^{eh}$ and $\mW^{hh}$ are uniformly initialized between $[-\frac{1}{\sqrt{H}}, \frac{1}{\sqrt{H}}]$.}
    \label{tab:my_label}
\end{table}

We implement all models using PyTorch on GPU A100 with one single card. 

We use the PyTorch version of the standard Transformer \citep{vaswani2017attention}.\footnote{The code is available at \url{https://pytorch.org/tutorials/beginner/transformer_tutorial.html}.} For RNNs, there are five matrix parameters: $\mW^{xe}\in \mathbb{R}^{E\times |V|}$ is the input embedding matrix, $\mW^{eh}\in
\mathbb{R}^{E\times H}$ is the embedding-to-hidden matrix, $\mW^{hh}\in \mathbb{R}^{H\times H}$ is the hidden-to-hidden matrix. We tie (share the same training parameters) the input embedding  $\mW^{xe}$ and output embedding $\mV$, which has been proved to lead to a significant reduction in perplexity \citep{press2016using}. So there is a projection matrix $\mP\in\mathbb{R}^{H\times E}$ before the output embedding. All this process is introduced in  \citep{press2016using}. 

For TTLM models, we tie the input tensor $\tW^{xe}$ and $\tV$. The implementation of $\delta$ is functioned by a reshape function, so the interaction between hidden and input can be computed by matrix product. We also let $\tG^{(1)}$ have the same parameters along the dimension $|V|$ (i.e., $\tG^{(1)}$ is simplified into a  $\mG^{(1)}\in\mathbb{R}^{1\times R}$ and thus it can be viewed as the initial hidden state).

\subsection{Rank and Effectiveness Analysis}
\label{sec: rank}

\begin{figure}[t]
    \centering
    \includegraphics[scale=0.36]{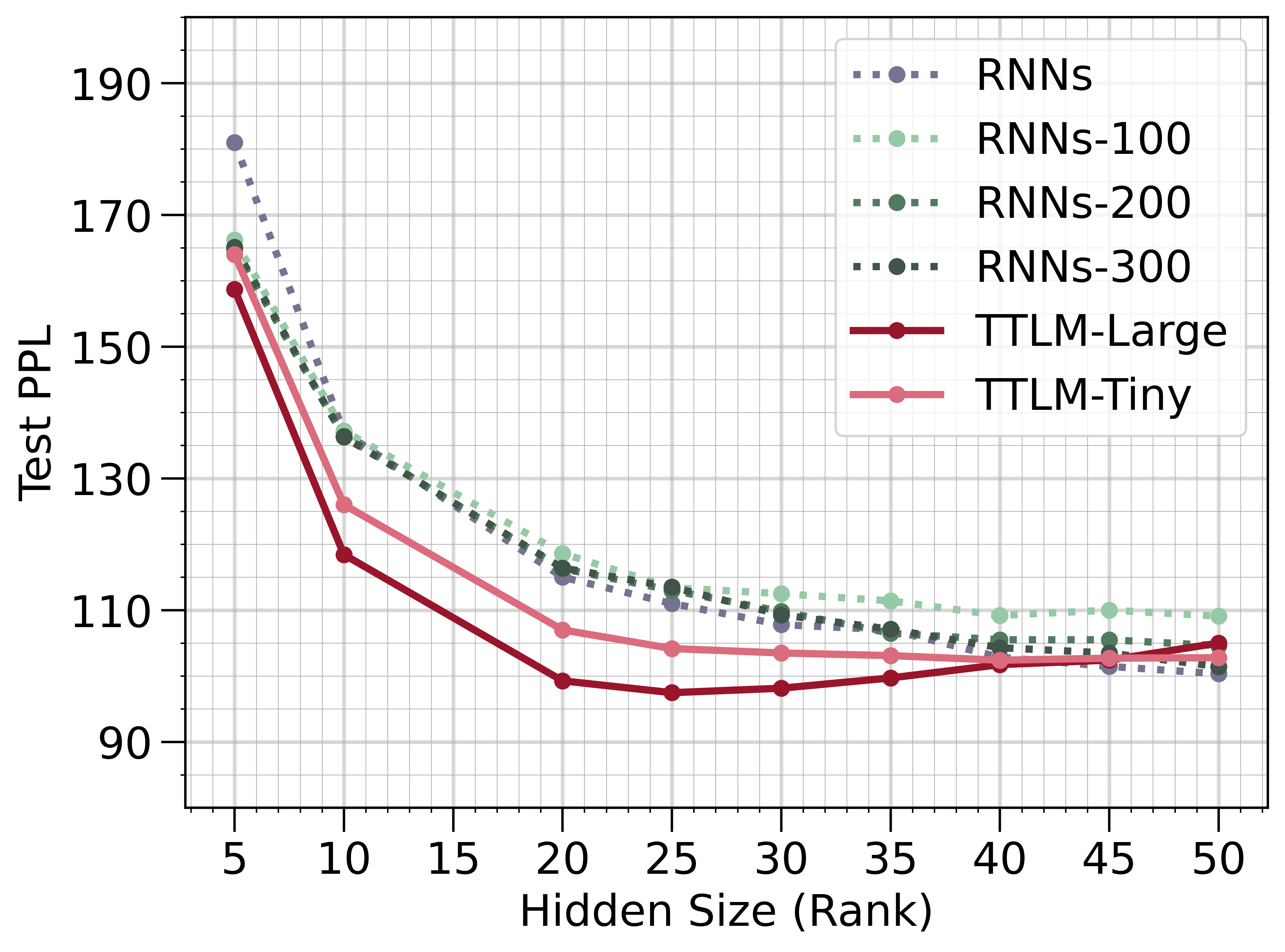}
    \caption{Test set PPL  on the PTB dataset w.r.t. ranks/hidden units. RNNs here denotes Vanilla-RNNs, which has the same embedding size as TTLM-Large and TTLM-Tiny. RNNs-100, RNNs-200, and RNNs-300 are the Vanilla-RNNs with fixed embedding sizes of 100, 200, and 300, respectively.}
    \label{fig:rank}
\end{figure}

The rank of the TT format has been used to explain the expressive power or long-term memory capacity of RNNs \citep{khrulkov2018expressive, levine2018benefits}. The \textit{rank} of TT decomposition has been proved to be the dimension of the hidden states of RNNs \citep{khrulkov2018expressive}, which reflect on the capacity of the RNNs. The higher rank can have more capacity and vice versa.  However, the relationship between rank and effectiveness in language modeling has yet to be shown practically. We will evaluate the effectiveness of TTLM-Large and TTLM-Tiny w.r.t ranks.

\subsubsection{Effectiveness}  
Table \ref{tab:evaluation} presents the results of the test set PPL for our models and the baselines on the WikiText-2 and PTB datasets. As shown, compared to Vanilla-RNNs, TTLM-Large reduces PPL by 14.3 and 16.0, respectively, and TTLM-Tiny reduces PPL by 1.7 and 8.5, respectively. Thus, when the number of hidden units or ranks is set to 20 and the embedding size is 400, both TTLM-Large and TTLM-Tiny perform better than all the baselines.
\begin{figure}[t]
    \centering
    \includegraphics[scale=0.24]{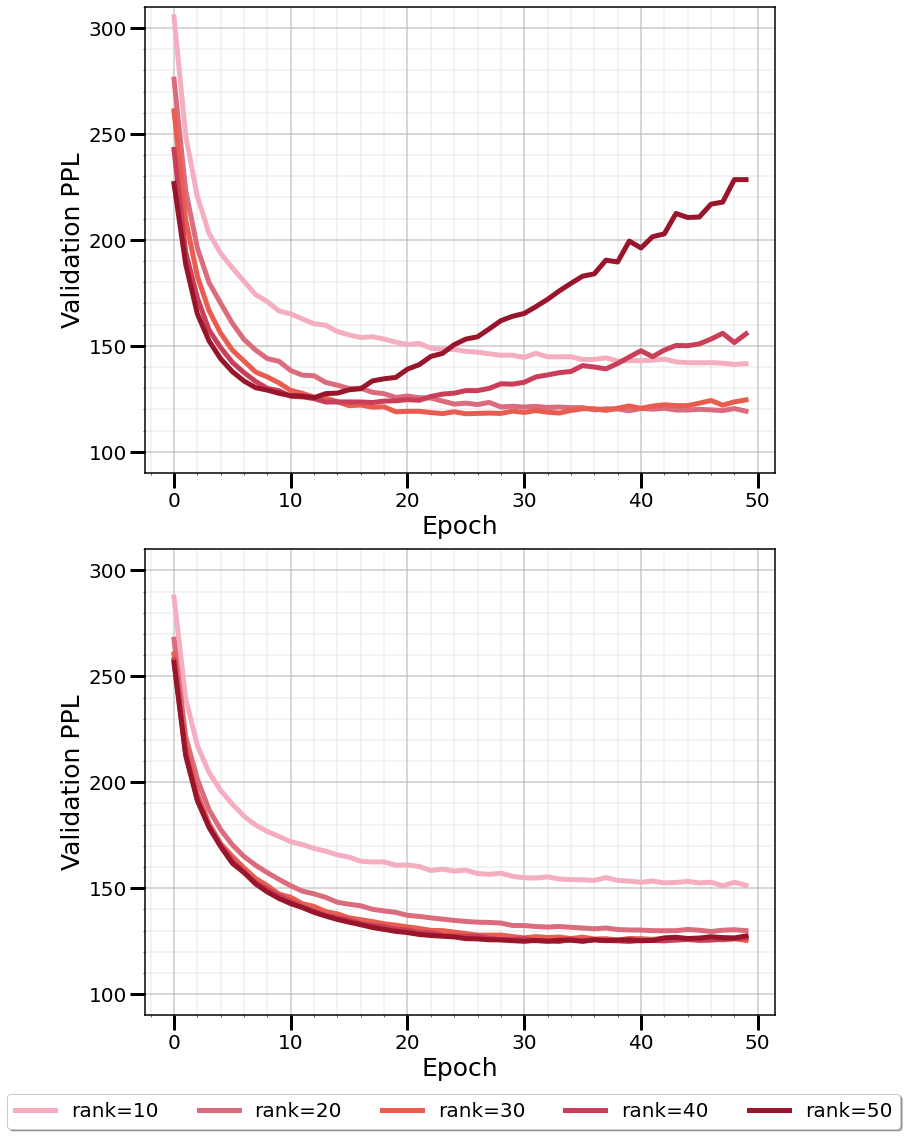}
    \caption{Validation set PPL of TTLM-Large and TTLM-Tiny with increasing ranks on the PTB dataset. Top: TTLM-Large, Bottom: TTLM-Tiny.}
    \label{fig:rank_analysis}
\end{figure}

To further evaluate the effectiveness of our models, we conduct a comparison between TTLM-Large, TTLM-Tiny, and Vanilla-RNNs using increasing ranks, as depicted in Fig. \ref{fig:rank}. It's worth noting that we also include RNNs-100, RNNs-200, and RNNs-300 to control for the potential impact of large embedding size.  As shown, even when the number of hidden units reaches 40, the test set PPL of RNNs decreases steadily, while TTLM-Large and TTLM-Tiny do not.  Thus, we expect our models to outperform Vanilla-RNNs with a \textit{low-scale} of hidden units (i.e., the number ranges from 5 to 40), but not larger scales. 

\subsubsection{Overfitting}  
\begin{figure*}[t]
    \centering
    \includegraphics[width= 1 \textwidth]{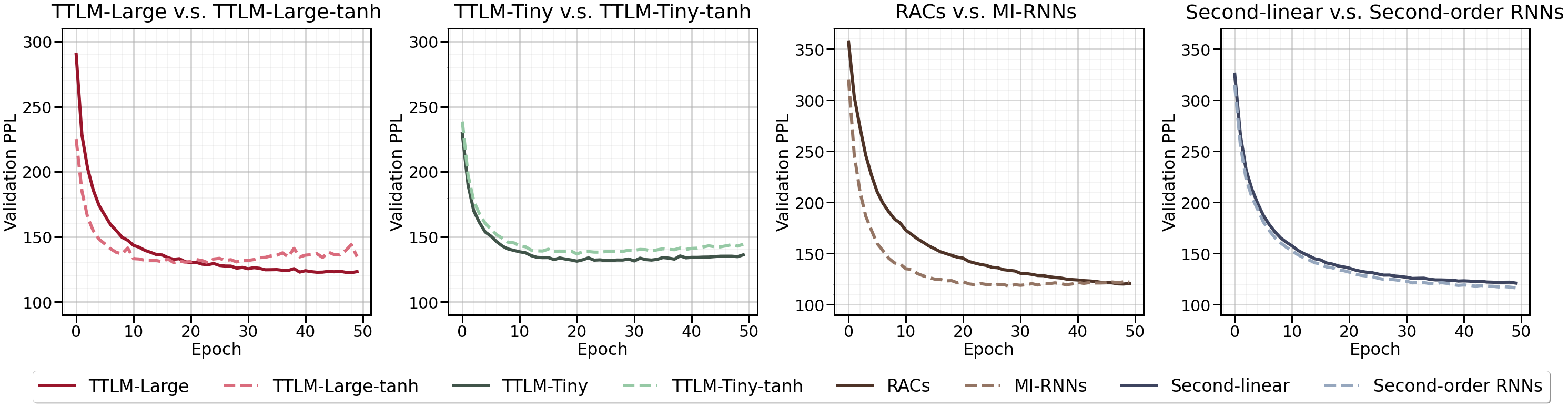}
    \caption{The influence of nonlinearity on TTLM variants on the PTB dataset. The suffix \texttt{-tanh} refers to a model using the $tanh$ activation function, indicated by dashed lines. Second-linear refers to Second-order RNNs without activation functions. Setting: all models have 17 hidden units/ranks.}
    \label{fig:activation_comparision}
\end{figure*}
Fig.\ \ref{fig:rank_analysis}  illustrates the performance of TTLM-Large and TTLM-Tiny on the validation set as the number of ranks
 increases. As shown,  the validation set PPL of TTLM-Large starts to rise in earlier training epochs when we gradually enlarge its ranks. In contrast, the validation PPL of TTLM-Tiny stably decreases as the number of ranks increases. The  comparison indicates that TTLM-Large is more prone to overfitting than TTLM-Tiny. This finding is further supported by the results in Fig. \ref{fig:rank}. The test set PPL of TTLM-Tiny consistently improves as the rank increases, while TTLM-Large's performance declines when the number of ranks reaches 25.

When we focus on the difference between the two models, TTLM-Large has an additional parameter tensor $\tW^{eh}$. Thus, we believe that the simpler parameterization of the TT cores, the more easily the model avoids overfitting. This finding is consistent with the comparison between MI-RNNs and Second-order RNNs by  \cite{wu_multiplicative_2016}. When it comes to practical  situations, we need to be aware that TTLM-Tiny has a lower capacity to fit the training data and, as a result, poses a lower risk of overfitting when compared to TTLM-Large.

\subsection{Nonlinearity  Analysis}
\label{sec: activation}
Previous studies have attempted to use TT decomposition as a theoretical platform to investigate RNNs \citep{khrulkov2018expressive, levine2018benefits}. However, one key difference between TT decomposition and the existing neural networks (like Vanilla-RNNs) is the nonlinearity activation functions inside the network models. The lack of nonlinearity in the tensor decomposition calls into question whether its theoretical analysis is transferable to models based on RNNs. To understand whether the effect of the $tanh$ activation function on the TT variants varies with the TT cores,  we provide an empirical result as displayed in Fig.\ \ref{fig:activation_comparision}.

Regarding convergence speed, $\tanh$ speeds up TTLM-Large-tanh, TTLM-Tiny-tanh, and MI-RNNs while barely influencing second-order RNNs. Regarding the magnitude of the lowest validation perplexity,  $\tanh$ impairs the performance  TTLM-Large and TTLM-Tiny but has little influence on multiplicative integration and the third-order tensor $\tT$ in Second-order RNNs.  

Thus, the influence of nonlinear activation functions on TTLM variants depends on TT cores settings, both for the convergence of validation PPL and the magnitude of the lowest validation PPL.  From an experimental point of view, we believe that the effect of nonlinearity functions on one TT variant cannot simply be transferred or analogized to another TT variant. This also suggests that one should be wary of the analogy between tensor decomposition and existing neural network models at the implementation level declared by previous research \citep{khrulkov2018expressive, levine2018benefits}. The nonlinear activation functions could be a factor influencing such an analogy.

\section{Conclusion}

Tensor networks having been proposed as promising language models, we first apply TT decomposition to real-world language modeling datasets and name the framework TTLM. We propose two variants: TTLM-Large and TTLM-Tiny, and show that they outperform Vanilla-RNNs with low-scale hidden units. The presentation of the experimental results is an advancement for exploring tensor networks in machine learning. Meanwhile, we demonstrate that Second-order RNNs, RACs, and MI-RNNs are special implementations of TTLM. 

A limitation of this study is that it shall examine the influence of different normalization functions. In future research, if appropriate mathematical tools and benchmarks are available, we plan to investigate the long-range correlation modeling capability of TTLM in natural language, which is believed to be one of the core features in TTLM. 


\bibliography{icml2023}
\bibliographystyle{icml2023}

\newpage
\appendix
\onecolumn

\section{Relationship between TT Cores in TTLM}
\label{sec: relationship}

To help readers understand the roles of TT cores in TTLM, we here provide a detailed calculation of the probability of a text $X = [x^{(1)}, x^{(2)}, \cdots ,x^{(N)}]$ by TTLM. Note that all the intermediate TT cores are equal to each other: $\tG = \tG^{(2)},...,\tG^{(N-1)}$ and $\tG^{(1)} = \tG^{(N)}$. 

The calculation of $\vy^{(t)}$ (i.e. the conditional probability of $x^{(t)}$ given  $x^{(1:t-1)})$ at time $t$) can be described as three steps. As step I, suppose $\vf(x^{(1)})$ is a one-hot vector having $f(x^{(1)})_1 = 1$. The calculation of $\tG^{(1)}\vf(x^{(1})$ in TTLM is as follows:

\begin{align}
\tG^{(1)}\vf(x^{(1}) &= 
\left[\begin{array}{c}
f\left(x^{(1)}\right)_1 \\
f\left(x^{(1)}\right)_2 \\
\ldots \\
f\left(x^{(1)}\right)_{|V|}
\end{array}\right] \left[\begin{array}{cccc}
\tG_{11}^{(1)} & \tG_{12}^{(1)} & \ldots & \tG_{1 R}^{(1)} \\
\tG_{21}^{(1)} & \tG_{22}^{(1)} & \ldots & \tG_{2 R}^{(1)} \\
\ldots & \ldots & \ldots & \ldots \\
\tG_{|V| 1}^{(1)} & \tG_{|V| 2}^{(1)} & \ldots & \tG_{|V| R}^{(1)}
\end{array}\right] \nonumber \\ 
&=\left[\tG_{11}^{(1)}, \tG_{12}^{(1)}, \cdots, \tG_{1R}^{(1)}\right]^T \nonumber \\ 
&=\vh_{\textrm{TTLM}}^{(1)} \nonumber
\end{align}


As step II, TTLM will calculate 
$\vf(x^{(i)})\tG\vh^{(i-1)}_{\textrm{TTLM}}$ where $i \in \{2,3,\cdots,t-1\}$. For example, $\vh_{\textrm{TTLM}}^{(2)}$ is calculated in Eq. \ref{eq: ttlm cell full} at time $t= 2$ as follows: 
\begin{align}
    \vh_{\textrm{TTLM}}^{(2)} = \vf(x^{(2)})^T\tG  \vh^{(1)}_{\textrm{TTLM}} \nonumber 
\end{align}

As step III, TTLM will output $\vy^{(t)}$ as follows:

\begin{align}
\tG^{(t)}\vh^{(t-1)}_{\textrm{TTLM}} &= 
\left[\begin{array}{cccc}
\tG_{11}^{(t)} & \tG_{12}^{(t)} & \ldots & \tG_{1 R}^{(t)} \\
\tG_{21}^{(t)} & \tG_{22}^{(t)} & \ldots & \tG_{2 R}^{(t)} \\
\ldots & \ldots & \ldots & \ldots \\
\tG_{|V| 1}^{(t)} & \tG_{|V| 2}^{(t)} & \ldots & \tG_{|V| R}^{(t)}
\end{array}\right] \left[\begin{array}{c}
h^{(t-1)}_{{\textrm{TTLM}}_1} \\
h^{(t-1)}_{{\textrm{TTLM}}_2} \\
\ldots \\
h^{(t-1)}_{{\textrm{TTLM}}_R}
\end{array}\right] \nonumber \\ 
&= \left[\begin{array}{c}
\sum_{i=1}^{R}\tG_{1i}^{(t)} h^{(t-1)}_{{\textrm{TTLM}}_1}  \\
\sum_{i=1}^{R}\tG_{2i}^{(t)} h^{(t-1)}_{{\textrm{TTLM}}_2} \\
\ldots \\
\sum_{i=1}^{R}\tG_{Ri}^{(t)} h^{(t-1)}_{{\textrm{TTLM}}_R} 
\end{array}\right] \nonumber
\end{align}

Observing the calculation, $\tG^{(1)}$, $\tG$ and $\tG^{(t)}$ theoretically have no parameters in common (though we set $\tG^{(1)} = \tG^{(t)}$ for simplicity). Further, their roles in TTLM are different: $\tG^{(1)}$ can be viewed as a word embedding matrix; $\tG$ deals with two sources of information, i.e. hidden state and input word; $\tG^{(t)}$ extracts the evidence provided in $\vh^{(t-1)}_{\textrm{TTLM}}$ and generates a set of scores over vocabulary.

\section{Relationship between TTLM and some RNNs}
\label{sec: TTLM generalization}

We now demonstrate the relationship between TTLM and Second-order RNNs,  Recurrent Arithmetic Circuits (RACs) and Multiplicative Integration RNNs (MI-RNNs).  

To avoid symbol clutter when representing  different RNNs, the notation is:  $\mW^{hx} \in \mathbb{R}^{R \times |V|}$ denotes the input-to-hidden  matrix, $\mW^{hh} \in \mathbb{R}^{R \times R}$ denotes hidden-to-hidden  matrix, $\phi(\cdot)$ is an element-wise nonlinear activation function. Also, different hidden states are denoted as: Second-order RNNs ($\vh^{(t)}_\textrm{2nd}$), RACs ($\vh^{(t)}_\textrm{RAC}$) and MI-RNNs ($\vh^{(t)}_\textrm{MI}$).

\subsection{Relation to Second-order RNNs}
\label{appendix: seecond}

Unlike Vanilla-RNNs \citep{mikolov2012context} that have \textit{additive} blocks, Second-order RNNs have interaction between hidden states and input data in \textit{multiplicative} form.   This is achieved by a third-order tensor $\tT$ with the $i$-th coordinate of the hidden states $\vh^{(t)}_\textrm{2nd}$ defined as \citep{hochreiter1997long, maupome-meurs-2020-language}:
\begin{equation} \small
\label{eq: second-order-rnn}
h^{(t)}_{\textrm{2nd}_i}  = \phi(\vf(x^{(t)})^T \tT_{i, :, :}\vh^{(t-1)}_\textrm{2nd} + \vb)
\end{equation}

where $\tT_{i, :, :} \in\mathbb{R}^{M \times R}$ is the $i$th slice of tensor $\tT\in\mathbb{R}^{M\times R \times R}$, and $\vb$ is a bias vector. For simplicity, we will ignore  $\vb$ for other variants of RNNs since $\vb$  can be seen as $0$th component of $\vf (x^{(t)})$ which equals to 1. \cite{rabusseau2019connecting} has provided that Tensor Trains can  generalize linear Second-order RNNs. We here provide a basic proof from the perspective of recursive property in TTLM.

\begin{claim}
\label{cm: G and T}
The third-order tensor $\tT$ in Second-order RNNs equals the TT cores in TTLM. There is a nonlinear activation $\phi$ such that the hidden states of Second-order RNNs is identical to that of TTLM when they are accompanied by $\phi$.
\end{claim}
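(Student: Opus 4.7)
The plan is to verify the claim by expanding both recurrences to component form and exhibiting an explicit mode-permutation that identifies $\tT$ with $\tG$. First, I would write out the Second-order RNN cell component-wise (absorbing $\vb$ as a constant zeroth feature of $\vf(x^{(t)})$ as the paper suggests), obtaining $h^{(t)}_{\textrm{2nd}_i} = \phi\!\left(\sum_{j,k} \etT_{ijk}\, f(x^{(t)})_j\, h^{(t-1)}_{\textrm{2nd}_k}\right)$. In parallel, I would expand Eq.\ \ref{eq: ttlm cell full} after the stated permutation of $\tG$ from $\mathbb{R}^{R\times|V|\times R}$ to $\mathbb{R}^{|V|\times R\times R}$, yielding $h^{(t)}_{\textrm{TTLM}_k} = \sum_{j,l} \etG_{j,l,k}\, f(x^{(t)})_j\, h^{(t-1)}_{\textrm{TTLM}_l}$. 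Both expressions are trilinear contractions of the same arity and shape $\mathbb{R}^{|V|}\times\mathbb{R}^{R}\to\mathbb{R}^{R}$, which is the structural observation on which everything else rests.

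Second, I would give the explicit parameter identification: set $\etT_{ijk} := \etG_{j,k,i}$, so the three modes of $\tT$ (output-hidden, input, input-hidden) are matched to the three modes of the permuted $\tG$ by a single transposition of the two rank-indices. Under this identification the two multilinear sums coincide term by term, establishing the first sentence of the claim. Third, to handle the activation: since the TTLM cell is purely multilinear, choosing $\phi$ to be the identity (or any fixed scalar scaling, which is a degenerate ``nonlinear'' choice) makes the two recurrences pointwise equal; more usefully, if one equips TTLM with the same coordinate-wise $\phi$ as the Second-order RNN, then the equality $\vh^{(t)}_{\textrm{2nd}} = \vh^{(t)}_{\textrm{TTLM}}$ propagates through an induction on $t$. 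The base case requires matching the initial states, which is immediate once we identify $\vh^{(0)}$ with the vector constructed from $\tG^{(1)}$ as in Appendix~\ref{sec: relationship}, and the inductive step is the cell equivalence just established.

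The main obstacle, more conceptual than technical, is that the claim's phrasing (``there is a nonlinear $\phi$'') is looser than the algebra actually supports: strict pointwise equality of the hidden-state sequences is a consequence of shared multilinearity, so it really holds for \emph{any} $\phi$ that both architectures are equipped with, the honest witness being $\phi = \mathrm{id}$. A secondary and minor nuisance is reconciling the paper's index convention for $\tT \in \mathbb{R}^{M\times R\times R}$ with the slice declaration $\tT_{i,:,:}\in\mathbb{R}^{M\times R}$, which is clearly a transposition typo; the cleanest write-up simply fixes the convention to $\tT \in \mathbb{R}^{R\times |V|\times R}$ up-front and then the permutation identifying $\tT$ with $\tG$ becomes the identity map, making the proof essentially a two-line check after the induction is set up.
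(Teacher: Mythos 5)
Your proposal is correct and follows essentially the same route as the paper's proof: expand both cells in component form, identify $\tT$ with the (index-permuted) TT core $\tG$ as the same trainable bilinear map $\mathbb{R}^{|V|}\times\mathbb{R}^{R}\to\mathbb{R}^{R}$, and observe that equipping the TTLM cell with the same coordinate-wise $\phi$ makes the recurrences coincide. Your additions---the explicit identification $\etT_{ijk}=\etG_{j,k,i}$, the induction on $t$ with the base case tied to $\tG^{(1)}$, and the remark that the phrasing ``there is a nonlinear $\phi$'' understates what the multilinearity actually gives---are refinements of details the paper leaves implicit, not a different argument.
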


\begin{proof} The proof is based on the following observation: We recursively unfold the calculation of TTLM in Eq. \ref{eq: ttlm_general}: 

\begin{equation} \small
\begin{aligned}
\label{eq: TTLM_unfold}
p(X) &= \sum_{{i_ = 1}}^{|V|} f(x^{(1)})_{i_1}\etG^{(1)}_{i_1\alpha_{1}}  \cdots \\
&= \sum_{{i_1,i_2 = 1}}^{|V|} \sum_{\alpha_1 = 1}^R   f(x^{(1)})_{i_1}\etG^{(1)}_{i_1\alpha_{1}} f(x^{(2)})_{i_2} \etG_{\alpha_{1}i_2\alpha_{2}} \cdots  \\
&  \quad\quad\vdots \\
&= \sum_{i_1, \cdots, i_N = 1}^{|V|} 
    \sum_{\alpha_1, \cdots ,\alpha_{N-1} =1}^{R}  f(x^{(1)})_{i_1}\etG^{(1)}_{i_1\alpha_1}
    f(x^{(2)})_{i_2}\etG_{\alpha_1i_2\alpha_2} \cdots f(x^{(N)})_{i_N}\etG^{(N)}_{\alpha_{N-1}i_N}   \\
\end{aligned}
\end{equation}
Observe in the above, that at each time step,
$\tG$ has two sources of ``input'': the information from the previous recursive unfolding (\textit{e.g.}, in the second line, the first line is the previous information), and the input data $\vf(x^{(t)})$. 
From this perspective, $\tG$ acts as a bilinear map $\tG: \mathbb{R}^{|V|} \times \mathbb{R}^R \rightarrow \mathbb{R}^R $, and we can regard the information in the previous line as a hidden state $\vh^{(t)}_\text{TTLM}$, given by: 
\begin{equation} \small
\begin{split}
    \label{eq: ttlm cell}
    h^{(t)}_{\textrm{TTLM}_{\alpha_{t}}}
    & = \sum_{{i_t = 1}}^{|V|} \sum_{{\alpha_t = 1}}^{R} f(x^{(t)})_{i_t}\etG_{i_t\alpha_{t}\alpha_{t-1}}  h^{(t-1)}_{\textrm{TTLM}_{\alpha_{t-1}}} \\
\end{split}
\end{equation}
where we permute the indices  of  $\etG_{\alpha_{t-1}i_t\alpha_{t}}$ as $\etG_{i_t\alpha_{t}\alpha_{t-1}}$  ( note that this does not change the number of indices). 

We can also represent the hidden states in Second-order RNNs shown by Eq. \ref{eq: second-order-rnn} in element-wise fashion:
\begin{equation} \small
\begin{split}
    \label{eq: second_element}
    h^{(t)}_{\textrm{2nd}_i}
    &= \phi(\vf(x^{(t)})^T \tT_{i, :, :}\vh^{(t-1)}_\textrm{2nd}) \\
    &= \phi\left(\sum^{|V|}_{j=1}\sum^{R}_{k=1} f(x^{(t)})_j\etT_{jik} h^{(t-1)}_{\textrm{2nd}_{k}}\right) \\
\end{split} 
\end{equation}
where $j,k$ are the dummy indices as $i_t, \alpha_{t}$;  $i$ specifies the coordinate of $\vh^{(t)}_{2nd}$ just like  $\alpha_t$ for $\vh^{(t)}_\text{TTLM}$. Thus,  $\tT$ and $\tG$ are the same-sized trainable bi-linear map. 

After demonstrating that the third-order tensor $\tT$ in Second-order RNNs equals the TT cores $\tG$, the only difference between the hidden states in Eq. \ref{eq: second_element}  and in Eq. \ref{eq: ttlm cell} is $\phi$. If we add $\phi$ for $\vh^{(t)}_\text{TTLM}$, the hidden states of Second-order RNNs and TTLM are identical,  as shown in Fig. \ref{fig:rnns}a.

\end{proof}

\subsection{Relation to RACs and  MI-RNNs}
\label{appendix: sec: claim 4.3}

We here focus on Multiplicative Integration (MI), a way to connect two sources of inputs by the Hadamard product `$\odot$'. MI has been used in RACs, Multiplicative RNNs (M-RNNs) \citep{sutskever2011generating} and MI-RNNs: 

\emph{Recurrent Arithmetic Circuits} (RACs) are recurrent networks with  hidden states $\vh^{(t)}_\textrm{RAC}$ defined as  \citep{levine2018benefits}:
\begin{equation} \small
\label{eq: RAC_simple}
   \vh^{(t)}_\textrm{RAC} =  \mW^{hx} \vf (x^{(t)}) \odot \mW^{hh}\vh^{(t-1)}_\textrm{RAC}
\end{equation}

where these hidden states are also used as an \emph{intermediate term} in M-RNNs. 

\emph{Multiplicative Integration RNNs} (MI-RNNs) are RACs with an activation function and hidden states $\vh^{(t)}_\textrm{MI}$ defined as  \citep{wu_multiplicative_2016}:

\begin{equation} \small
\label{eq: mirnn}
    \vh^{(t)}_\textrm{MI}= \phi(\mW^{hx} \vf(x^{(t)}) \odot \mW^{hh}\vh^{(t-1)}_\textrm{MI})
\end{equation}



\begin{figure}[t]
    \centering
    \includegraphics[scale=0.45]{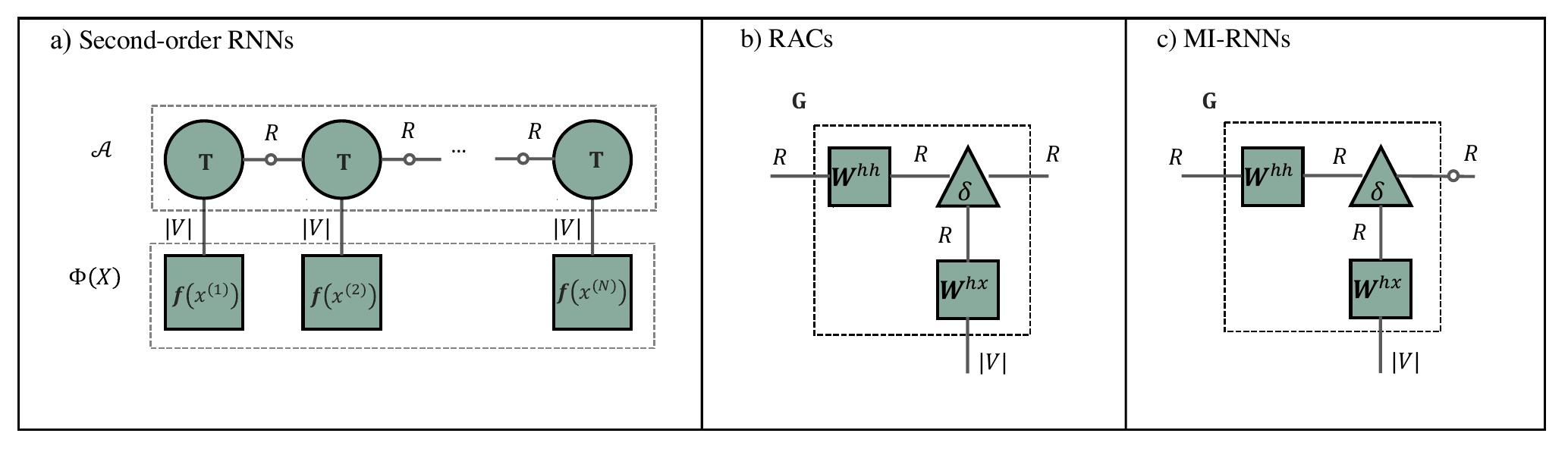}
    \caption{a) Second-order RNNs under TTLM framework.  b) Hidden state of RACs under TTLM framework. c) hidden state of MI-RNNs under TTLM framework. The dashed line in the square denotes $\mathcal{A}, \Phi(X)$ or $\tG$. The small hollow circles denote the activation functions.}
  \label{fig:rnns}   
\end{figure}




\begin{claim}
\label{cm: G and RACs} Given the condition the TT-scores: $\tG = \mW^{hx} \odot \mW^{hh}$. 
The hidden states of RACs are identical to that of TTLM. There is a nonlinear function $\phi$ such that the hidden states of MI-RNNs are identical to that of TTLM if they are accompanied by $\phi$.   
\end{claim}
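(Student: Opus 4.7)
The plan is to mirror the element-wise template used in the proof of Claim~\ref{cm: G and T}: unfold both the RAC recurrence and the TTLM hidden-state formula coordinate-by-coordinate, identify the TT core via the product structure implied by $\tG = \mW^{hx}\odot\mW^{hh}$, and match terms directly. First I would distribute the Hadamard product in Eq.~\ref{eq: RAC_simple} over the two matrix-vector products so that the $i$-th component $h^{(t)}_{\textrm{RAC}_i}$ becomes a double sum of the form $\sum_{j,k} W^{hx}_{ij}W^{hh}_{ik}\,f(x^{(t)})_j\,h^{(t-1)}_{\textrm{RAC}_k}$. Next, I would interpret $\tG = \mW^{hx}\odot\mW^{hh}$ as shorthand for the third-order tensor whose components are $\etG_{jik} = W^{hx}_{ij}W^{hh}_{ik}$ (the natural ``outer Hadamard'' construction obtained by tying the shared hidden index $i$ of the two matrices), substitute this $\tG$ into Eq.~\ref{eq: ttlm cell}, and observe that the resulting sum matches the RAC expansion term-for-term. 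A trivial induction on $t$ under identical initialization then yields the desired equality $\vh^{(t)}_{\textrm{RAC}} = \vh^{(t)}_{\textrm{TTLM}}$.

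For the MI-RNN part, Eq.~\ref{eq: mirnn} is exactly the RAC recurrence composed with an element-wise activation $\phi$. Since $\phi$ acts coordinate-wise, the same identification of TT cores turns $\phi$ applied to Eq.~\ref{eq: ttlm cell} into precisely Eq.~\ref{eq: mirnn}, paralleling how $\phi$ was added to TTLM in Claim~\ref{cm: G and T} to recover Second-order RNNs. Concretely, the witness $\phi$ is simply the same activation used inside the MI-RNN itself. No new machinery is needed beyond noting that activations commute with the coordinate-wise matching established in the first half.

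The main obstacle is purely notational: the Hadamard product is usually defined between tensors of matching shape, whereas here $\mW^{hx}\in\mathbb{R}^{R\times|V|}$ and $\mW^{hh}\in\mathbb{R}^{R\times R}$ have different shapes. I would therefore devote one sentence to justifying that $\mW^{hx}\odot\mW^{hh}$ should be read as the third-order tensor whose $i$-th slice along the hidden mode is the rank-one matrix formed from the outer product of the $i$-th row of $\mW^{hx}$ with the $i$-th row of $\mW^{hh}$. Once this convention is pinned down, the remainder of the argument is routine index bookkeeping, and the MI-RNN half follows by pre-composing the same identity with $\phi$. A minor secondary subtlety is ensuring that the permutation of $\tG$'s indices used here is consistent with the permutation adopted in the proof of Claim~\ref{cm: G and T}; I would state this alignment explicitly so that readers can read the two claims in parallel without re-deriving the index conventions.
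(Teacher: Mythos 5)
Your proposal is correct and follows essentially the same route as the paper: expand the Hadamard product in Eq.~\ref{eq: RAC_simple} coordinate-wise into the double sum $\sum_{j,k} W^{hx}_{ij}W^{hh}_{ik}f(x^{(t)})_j h^{(t-1)}_k$, identify the resulting rank-structured third-order tensor with the TT core $\tG$ of Eq.~\ref{eq: ttlm cell}, and pre-compose with $\phi$ for the MI-RNN case. The only cosmetic difference is that the paper formalizes the shared hidden index via a diagonal tensor $\delta_{j\alpha_t k}$ contracted between $\mW^{hx}$ and $\mW^{hh}$, whereas you write the slice-wise outer-product convention directly (and, helpfully, make explicit the induction on $t$ and the shape-mismatch caveat that the paper leaves implicit).
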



\begin{proof} The proof is based on the following observation:
In the language of tensor contractions, Eq.\ \ref{eq: RAC_simple} involves contracting the input weights matrix  $\mW^{hx}$ with the input vector $\vf (x^{(t)})$, and contracting the hidden weights matrix $\mW^{hh}$ with $\vh^{(t-1)}_\textrm{RAC}$. The Hadamard product of the two is a third-order diagonal tensor $\delta \in \mathbb{R}^{R \times R \times R}$ such that $\delta_{ijk} = 1$ if{f} the $i = j = k$, and $\delta_{ijk} = 0$ otherwise. Thus, we can write Eq.\ \ref{eq: RAC_simple} in element-wise fashion: 
\begin{equation} \small
\begin{split}
    \label{eq: RAC_element}
    h^{(t)}_{\textrm{RAC}_{\alpha_t}}
    & = \sum_{{i_t = 1}}^{|V|} \sum_{{\alpha_t = 1}}^{R} f(x^{(t)})_{i_t}W^{hx}_{i_tj}\mathbf{\delta}_{j\alpha_{t}k}W^{hh}_{k\alpha_{t-1}} h^{(t-1)}_{\textrm{RAC}_{\alpha_{t-1}}} \\
    & = \sum_{{i_t = 1}}^{|V|} \sum_{{\alpha_t = 1}}^{R} f(x^{(t)})_{i_t}\etG_{i_t\alpha_{t}\alpha_{t-1}} h^{(t-1)}_{\textrm{RAC}_{\alpha_{t-1}}} \\
\end{split}
\end{equation}

where $\tG = \mW^{hx} \odot \mW^{hh}$. In this case, the hidden state of TTLM in Eq. \ref{eq: ttlm cell} is equal to the hidden state of RACs in Eq. \ref{eq: RAC_element},  as shown in Fig. \ref{fig:rnns}b.  Similarly,  if Eq. \ref{eq: ttlm cell} is accompanied with an activation function $\phi$,  Eq. \ref{eq: ttlm cell}  is equal to the hidden state of MI-RNNs in Eq. \ref{eq: mirnn} as shown in Fig. \ref{fig:rnns}c. 

\end{proof}

Given Claim \ref{cm: G and T} and \ref{cm: G and RACs}, the three models shall be simulated by TTLM with a nonlinear activation function and we leave finding a theoretical proof of this conjecture to a future work.

\section{Comparison with related work on experiment datasets}
\label{sec:datasets}
\begin{table}[t]
    \centering
    \begin{tabular}{l|c|c|c|c}
    \toprule
        Dataset & Data type & Vocab  & $N$ & Real-world language  \\
         \midrule
        Tomita grammars  \cite{tomita1982dynamic}  & Disc. & 2  & 10k & $\times$ \\ 
        Motzkin grammar \citep{alexander2021exact} & Disc. & 3 & 10k   &$\times$ \\
         Email addresses \citep{radev2008clair} & Disc. & $\leq 256$ & 4k & $\times$  \\
        \midrule
        POWER  \cite{Dua:2019} & Cont. & - & 1659k & $\times$ \\
        GAS \cite{fonollosa2015reservoir} & Cont. & - & 852k &  $\times$ \\
        HEPMASS \cite{baldi2016parameterized} & Cont. & - & 315k & $\times$  \\
        MINIBOONE \cite{roe2005boosted} & Cont. & -  & 29k & $\times$  \\
        BSDS300 \citep{martin2001database}  & Cont. & - & 1000k &  $\times$ \\
         
        \midrule
        PTB \citep{marcinkiewicz1994building} & Disc. & 10k  & 31k & $\surd$ \\
        WikiText-2  \citep{merity2016pointer} & Disc. & 30k & 73k &  $\surd$ \\  
        
 \bottomrule
    \end{tabular}
    \caption{The column "datasets" means the training datasets. The first three are used in  \citep{miller2021tensor}, the following five are used in  \cite{novikov2021tensor}, and the last two are used in our paper.  The column "$N$" denotes the number of training examples. The column "Vocab" denotes the number of types. The value "Cont." denotes continuous variable, while the "Disc." denotes  discrete variables. }
    \label{tab:datasets}
\end{table}

\end{document}